\theoremstyle{definition}
\newtheorem{defn}{Definition}
\newtheorem{theorem}{Theorem}
\newtheorem{lem}{Lemma}
\definecolor{WUMEI}{RGB}{30,19,29}
  \providecommand\BibTeX{{%
    \normalfont B\kern-0.5em{\scshape i\kern-0.25em b}\kern-0.8em\TeX}}}
\begin{document}

\title{Unveiling Delay Effects in Traffic Forecasting: A Perspective from Spatial-Temporal Delay Differential Equations}

\copyrightyear{2024}
\acmYear{2024}
\setcopyright{rightsretained}
\acmConference[WWW '24]{Proceedings of the ACM Web Conference 2024}{May 13--17, 2024}{Singapore, Singapore}
\acmBooktitle{Proceedings of the ACM Web Conference 2024 (WWW '24), May 13--17, 2024, Singapore, Singapore}\acmDOI{10.1145/3589334.3645688}
\acmISBN{979-8-4007-0171-9/24/05}
 
\author{Qingqing Long}
\affiliation{%
  \institution{Computer Network Information Center, Chinese Academy of Sciences}
  \country{China}
}
 \email{qqlong@cnic.cn}

\author{Zheng Fang}
\affiliation{%
  \institution{Peking University}
  \country{China}
}
 \email{fangz\_z@pku.edu.cn}

\author{Chen Fang}
\affiliation{%
  \institution{Institute of Zoology, Chinese Academy of Sciences}
  \country{China}
}
 \email{fangchen23@ioz.ac.cn}

\author{Chen Chong}
\affiliation{%
  \institution{Terminus Group}
  \country{China}
}
 \email{chenchong.cz@gmail.com}
 
\author{Pengfei Wang}
\authornote{Corresponding Author.}
\affiliation{%
  \institution{Computer Network Information Center, Chinese Academy of Sciences.}
   \institution{University of Chinese Academy of Sciences}
  \country{China}
}
\email{pfwang@cnic.cn}

\author{Yuanchun Zhou}
\affiliation{%
  \institution{Computer Network Information Center, Chinese Academy of Sciences.}
   \institution{University of Chinese Academy of Sciences}
   \country{China}
}
\email{zyc@cnic.cn}

\begin{CCSXML}
<ccs2012>
    <concept>
       <concept_id>10002951.10003227.10003236</concept_id>
       <concept_desc>Information systems~Spatial-temporal systems</concept_desc>
       <concept_significance>500</concept_significance>
       </concept>
   <concept>
       <concept_id>10002950.10003741.10003746</concept_id>
       <concept_desc>Mathematics of computing~Continuous functions</concept_desc>
       <concept_significance>500</concept_significance>
       </concept>
   <concept>
       <concept_id>10010405.10010481.10010485</concept_id>
       <concept_desc>Applied computing~Transportation</concept_desc>
       <concept_significance>300</concept_significance>
       </concept>
  
 </ccs2012>
\end{CCSXML}
\ccsdesc[500]{Information systems~Spatial-temporal systems}
\ccsdesc[500]{Mathematics of computing~Continuous functions}
\ccsdesc[300]{Applied computing~Transportation}

\begin{abstract}
Traffic flow forecasting is a fundamental research issue for transportation planning and management, which serves as a canonical and typical example of spatial-temporal predictions. In recent years, Graph Neural Networks (GNNs) and Recurrent Neural Networks (RNNs) have achieved great success in capturing spatial-temporal correlations for traffic flow forecasting. Yet, two non-ignorable issues haven't been well solved: 
1) The message passing in GNNs is immediate, while in reality the spatial message interactions among neighboring nodes can be delayed. The change of traffic flow at one node will take several minutes, i.e., \textit{time delay}, to influence its connected neighbors.
2) Traffic conditions undergo continuous changes. The prediction frequency for traffic flow forecasting may vary based on specific scenario requirements. Most existing discretized models require retraining for each prediction horizon, restricting their applicability. 
To tackle the above issues, we propose a neural \textbf{\underline{S}}patial-\textbf{\underline{T}}emporal \textbf{\underline{D}}elay \textbf{\underline{D}}ifferential \textbf{\underline{E}}quation model, namely STDDE. It includes both delay effects and continuity into a unified delay differential equation framework, which explicitly models the time delay in spatial information propagation. Furthermore, theoretical proofs are provided to show its stability.
Then we design a learnable traffic-graph time-delay estimator, which utilizes the continuity of the hidden states to achieve the gradient backward process. Finally, we propose a continuous output module, allowing us to accurately predict traffic flow at various frequencies, which provides more flexibility and adaptability to different scenarios.
Extensive experiments show the superiority of the proposed STDDE along with competitive computational efficiency. Moreover, both quantitative and qualitative experiments are conducted to validate the concept of a delay-aware module. Also, the flexibility validation shows the effectiveness of the continuous output module.
\end{abstract}

\keywords{deep graph learning, differential equation, traffic network, traffic flow prediction, continuous systems }

\maketitle

\section{Introduction}
Traffic forecasting is a fundamental research problem of Intelligent Transportation Systems (ITS) \cite{nagy2018survey,yu2018spatio,ran2012modeling,fang2022polarized}, which affects a variety of smart city applications \cite{song2020spatial,ju2024survey}, such as trip planning \cite{choi2021graph,fang2020constgat} accident prediction \cite{islam2022anomaly,ju2023comprehensive}, and urban management \cite{chavhan2020prediction,yuan2021effective}. Traffic flow forecasting aims to predict the future traffic flow based on historical data and underlying traffic networks.

Traffic flow forecasting is a challenging task due to the inherent spatial-temporal dependencies. Benefiting from the flourishing of deep learning, a large number of deep models have been proposed for traffic forecasting. In the temporal dimension, RNN-based models and their variants \cite{schuster1997bidirectional} occupy the mainstream status, and temporal convolution networks \cite{lu2020spatiotemporal} have also attracted much attention due to their superior computation efficiency. In the spatial dimension, considering that most traffic networks are non-Euclidean other than grid-partitioned, GNN-based methods \cite{yu2018spatio,li2018diffusion,long2020graph} beat CNN-based ones \cite{zhang2019short} and become predominant owing to their strong ability to deal with graph-structured data. Extensive works combine the spatial module and the temporal module to achieve significant improvements, among which STGCN \cite{yu2018spatio} and DCRNN \cite{li2018diffusion} and DSTAGN \cite{lan2022dstagnn} are the representative.

\begin{figure*}[htbp]
\centering
          \subfigure[General GNN propagation process.]{ \includegraphics[width=0.38\linewidth]{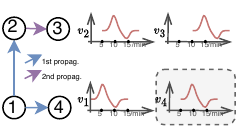}}
          \subfigure[Realistic propagation process.]{ \includegraphics[width=0.38\linewidth]{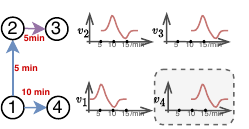}}
           \hspace{-2mm}\subfigure[Distribution of time-delay in traffic dataset PEMS04.]{ \includegraphics[width=0.23\linewidth]{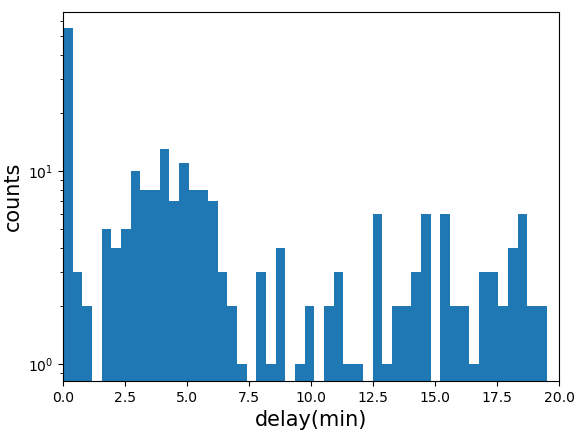}}
\caption{(a) and (b) show the comparison of spatial-temporal signal propagation between general GNNs and realistic conditions. Node 2 and node 4 receive the same update information simultaneously in graph propagation, while they do not in the realistic scene. Fig. (c) shows the distribution of delay values in the real-world traffic network, which are computed based on the max-cross-correlation method \cite{azaria1984time}.}
\label{fig:intro}
\vspace{-0.3cm}
\end{figure*}

Nevertheless, previous works prove the following shortcomings,

\noindent \textbf{(1) The delays in the graph signal propagation process are overlooked.} When an incident occurs at specific nodes, the influence will take several minutes (i.e. \emph{time delay}) to propagate to their neighboring nodes. However, the delay effect is largely neglected in existing spatial-temporal traffic forecasting issues. Time series models \cite{fu2016using,benidis2022deep}, such as RNN and GRU, are capable of modeling scenarios in which the delay remains consistent across all nodes and timestamps. 
In contrast, the practice is quite the opposite, \textbf{time-delays vary significantly at different nodes and timestamps}, as illustrated in Fig. 1(c). It shows the time-delay distribution among neighbors in the PEMS03 dataset. Therefore, a separate module is required to characterize and model these variations. As shown in Fig. \ref{fig:intro}(a) and 1(b), general GNNs propagate the suddenly changed message indistinguishably based on the adjacency relation, leading to a sub-optimal prediction ahead of the ground truth. Thus it is urgent to involve delay effects in spatial-temporal traffic forecasting.
    
\noindent \textbf{(2) The inherent continuity in traffic system is not well-explored.} Existing methods mainly utilize RNNs \cite{schuster1997bidirectional} or TCNs \cite{lu2020spatiotemporal}, which accept discrete observations as input, to capture the temporal dependencies. These methods are limited in terms of flexibility and applicability. Specifically, for the same traffic system, the required prediction horizon and resolution may vary across different applied scenarios, and the model needs to be retrained for each specific demand. Also, traffic data is notable for its inherent sparsity \cite{liu2020graphsage,zhou2020foresee}. This sparsity arises due to the limited availability of traffic sensors, particularly in extensive road networks. For example, the sampling precision of sensors deployed within the traffic network may be at a 10-minute interval. However, during the prediction phase, we aspire to attain finer-grained forecasting precision to enable rapid responses to events, such as travel time planning \cite{aryandoust2019city} or traffic emergency management \cite{zhou2020foresee}.

To tackle above mentioned issues, we propose a neural Spatial-Temporal Delay Differential Equation model (STDDE). In contrast to existing methods, STDDE presents an innovative paradigm for spatial-temporal traffic analysis by addressing the aforementioned two challenges. STDDE explicitly captures and leverages delayed spatial interactions among neighboring nodes. Furthermore, it models spatial-temporal evolution signals from a continuous perspective, departing from traditional recurrent approaches.
Specifically, the delay values can be acquired through pre-processing or a learnable estimator. Then we combine the specific historical hidden states of its own and its neighbors to effectively integrate spatial and temporal information by using a Delay-aware Differential Equation (DDE). Then we theoretically prove the proposed delay differential equation is asymptotically stable. 
We conduct experiments on six popular used real-world traffic datasets. The results demonstrate that our model outperforms state-of-the-art models while maintaining competitive computational efficiency. Quantitative and qualitative experiments are conducted to validate the effectiveness of the delay-aware module. Additionally, the flexibility validation confirms the effectiveness of the continuous output module.

The main contributions of this work are summarized as follows:
\begin{itemize}
    \item We propose a \textbf{\underline{S}}patial-\textbf{\underline{T}}emporal \textbf{\underline{D}}elay \textbf{\underline{D}}ifferential \textbf{\underline{E}}quation model, namely STDDE, which includes both delay effects and continuity into a unified delay differential equation framework, which explicitly models the time delay in spatial information propagation. 
    \item We design a learnable traffic-graph time-delay estimator, which utilizes the continuity of the hidden states to achieve the gradient backward process.
    \item We propose a continuous output module, allowing us to accurately predict traffic flow at various frequencies, which provides more flexibility to different scenarios.
    \item We conduct experiments on six popular used datasets, in which results show that our model outperforms the SOTAs and exhibits competitive computational efficiency. 
\end{itemize}

\section{Related Work}
\subsection{Traffic Flow Forecasting}
A large body of research has been conducted on traffic flow forecasting in recent years. Traffic flow forecasting can be viewed as a spatial-temporal forecasting task leveraging spatial-temporal data collected by various sensors to predict future traffic conditions. In recent years, deep learning methods have dominated traffic flow forecasting issues, due to their superior ability to model complex spatial-temporal correlations. The models combining graph neural networks (GNN) \cite{kipf2016semi,long2021theoretically,long2021hgk} and recurrent neural networks (RNN) \cite{schuster1997bidirectional} are the representative. Specifically, DCRNN \cite{li2017diffusion} views the traffic flow as a diffusion process on a directed graph and utilizes GRU to capture the temporal features. STGCN \cite{yu2018spatio} utilizes graph convolution and 1D convolution to capture spatial dependencies and temporal correlations respectively. 
\cite{jiang2023pdformer} proposes a PDFormer model. While PDFormer mentions the concept of delay, its core mechanism involves utilizing attention to the historical time series, rather than explicitly utilizing delay in the propagation process. It implies that it still cannot capture the intricate delay information among graph vertices.
In general, despite their achieved success, all existing works are limited to the spatial-temporal stacking structure and ignore the delay effect, which deviates from the real situation of traffic.

\subsection{Neural Differential Equations}
The neural ordinary differential equation (NODE) \cite{chen2018neural} was first proposed as a continuous version of residual neural networks (ResNet) \cite{wu2019wider}.  Due to its apparent suitability for dynamics-governed time-series, NODE is soon utilized in the time series analysis, especially when the input data is irregularly sampled or partially observed \cite{de2019gru, rubanova2019latent, lechner2020learning}. However, the solution of neural ODE is totally determined by the initial condition, which means later arriving data would not exert influence on the equation, this is also why neural ODE is generally applied cooperatively with RNN modules to deal with incoming data. Neural control differential equation (NCDE) \cite{kidger2020neural} solved this problem by constructing a continuous path from discrete input data, and adjusting the evolution trajectory according to the continuous control signal.
Another parallel and relevant work is the Neural Delay Differential Equation (NDDE) \cite{zhu2021neural}. As emphasized in \cite{dupont2019augmented}, the flow of NODE cannot represent the systems with the effect of time delay. The emergence of NDDE fills the blank. 

Few pioneering works have been conducted in traffic forecasting with a neural differential equation framework. STGODE \cite{fang2021spatial} first utilizes the neural ODE to transform graph convolution into a continuous version to acquire a larger spatial-temporal receptive field. STG-NCDE \cite{choi2021graph} adopts the neural CDE to deal with irregular-sampled time series. Despite the success have achieved, none of these works take the delay effect into consideration. In this paper, we first extend the NDDE to multi-variable conditions for spatial-temporal modeling and cooperate with NCDE to construct continuous traffic signal evolution.

\section{Preliminary}
\subsection{Problem Definition}
In this paper, we focus on the long-term traffic flow forecasting problem. The traffic network is represented as a graph $\mathcal{G}=(V, E, A)$, where $V$ is the set of $N$ traffic nodes, $E$ is the set of edges, and $A \in \mathbb{R}^{N\times N}$ is an adjacency matrix representing the connectivity of $N$ nodes. The traffic flow is represented as a flow matrix $X \in \mathbb{R}^{T\times N}$, and $X_{tn}$ denotes the traffic flow of node $n$ at time $t$. The goal of traffic flow forecasting is to learn a mapping function $f$ to predict the future $T'$ steps traffic flow given the historical $T$ steps information, which can be formulated as follows, 
\begin{equation}
    \left[ X_{t-T+1,:}, X_{t-T+2,:}, \cdots, X_{t,:}; \mathcal{G}\right] \underset{train}{\overset{f}{\longrightarrow}} \left[ X_{t+1,:}, X_{t+2,:}, \cdots, X_{t+T',:}\right].
\end{equation}
Moreover, the model trained at some fixed grain may need to generate a differently-grained prediction to satisfy the complicated real-world needs, which is formulated as follows,
\begin{equation}
    \left[ X_{t-T+1,:}, X_{t-T+2,:}, \cdots, X_{t,:}; \mathcal{G}\right] \underset{infer}{\overset{f}{\longrightarrow}} \left[ X_{t+dt_1,:}, X_{t+dt_2,:}, \cdots, X_{t+dt_n,:}\right].
\end{equation}
where $dt_1, dt_2, \cdots, dt_n$ are \textbf{arbitrary} positive numbers.

\subsection{Neural Differential Equations}
\subsubsection{Neural ODE (NODE)}
The residual connection structure can be viewed as a discrete manner of Neural Ordinary Differential Equation (NODE). The update of representation $h$ is a special case of the following equation, 
\begin{equation}
    h_{t+\Delta t} = h_t + \Delta t \cdot f(h_t, \theta_t),
\end{equation}
with $\Delta t = 1$. Through letting $\Delta t \rightarrow 0$ and unifying $\theta_t$ into $\theta$ for parameter efficiency, we get the continuous version, 
\begin{equation}
    h(T)=h(0)+\int_{0}^{T} f\left( h(t), t, \theta \right) \mathrm{d} t,
\end{equation}
where $h(0)$ is acquired from an input transformation. As the derivative in the ODE is parameterized with a neural network, the above version is named Neural ODE. To achieve memory efficiency, the adjoint sensitivity method is adopted in the backward process \cite{chen2018neural}, which computes the gradients through another ODE rather than step-by-step backpropagation. 

\subsubsection{Neural NCDE (NCDE)}
To establish connections with input data, NCDE is proposed and formulated as follows,
\begin{equation}
    h(T) = h(0) + \int_{0}^{T} f(h(t), t, \theta)\mathrm{d} X_t,
\end{equation}
where the integral is a Riemann–Stieltjes integral \cite{mozyrska2009riemann}, and $X_t$ can be viewed as a signal controller in driving the equation evolution process. As Fig. \ref{fig:related_work} shows, neural CDE eliminates the discontinuity at the data arriving point and renders the whole process continuous in the hidden manifold space.

\begin{figure}[htbp]
  \centering
  \subfigure[ODE-RNN]{
  \includegraphics[width=0.49\linewidth]{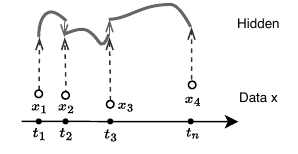}}
  \hspace{-2mm}
  \subfigure[CDE]{
  \includegraphics[width=0.49\linewidth]{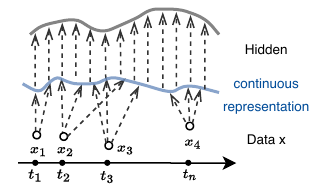}}
  \caption{The workflow comparison of original discrete time-series processing and CDE processing scheme.}
    \label{fig:related_work}
\end{figure}

\section{Model: STDDE}
Fig. \ref{fig:frame} shows the overall framework of our proposed STDDE. It consists of two components. The first component includes both delay effects and continuity into a unified delay differential equation framework, which explicitly models the time delay in spatial information propagation. As the Fig. \ref{fig:frame} shows, the hidden state of one node, and the flows evolve in a delay-effect manner, i.e. the hidden state of $v_i$ at $t$ is influenced by the state of $v_j$ at $t-\tau_{ij}$, where $\tau_{ij}$ is the delay from $v_j$ to $v_i$. 
The second component is the continuous output module, allowing us to accurately predict traffic flow at various frequencies.
More details will be demonstrated in the following sections.

\begin{figure*}[htbp]
\centering
\includegraphics[width=\linewidth]{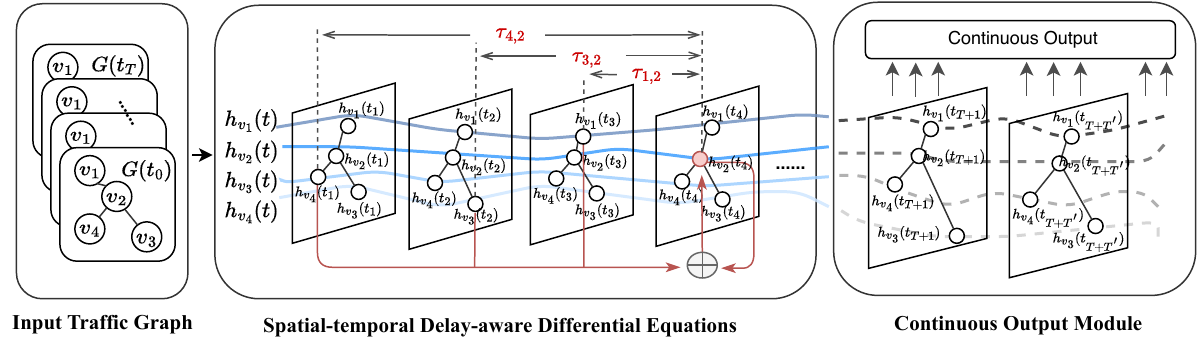}
\caption{Overview of STGDDE. It consists of two components. The first component includes both delay effects and continuity into a unified delay differential equation framework, which explicitly models the time delay in spatial information propagation. The second component is the continuous output module, allowing us to accurately predict traffic flow at various frequencies.}
\label{fig:frame}
\vspace{-0.3cm}
\end{figure*}

\subsection{Spatial-temporal Delay-aware Neural Differential Equations}
\subsubsection{Neural DDE (NDDE)}
We first introduce the framework of delay-aware neural differential equations. NDDE introduces the delay effect to improve the precision of signal modeling, in which the evolution process is related to its history, 
\begin{equation}
 h(t) =\left\{ \begin{aligned}
        & \phi(t), & t \leq 0, \\
        &  h(0) + \int_{0}^{T} f(h(t), h(t-\tau), t, \theta) \mathrm{d}t, & t > 0.
    \end{aligned}  \right. 
\end{equation}
where $\tau$ is the delay value, and $\phi(t)$ is the history function. The introduction of the delay $\tau$ extends the representation ability of neural ODE and enables modeling a more complex evolution process.

In this paper, we take the GRU \cite{fu2016using} as an example to elaborate on the specific derivation of STDDE. Specifically, let $h_t, z_t, g_t$ denote the hidden state, update gate, and update vector respectively, the GRU is defined as follows,
\begin{align}
    z_t &= \sigma (W_z h_{t-1} + U_z g_t + b_z), \\ \nonumber
    h_t &= z_t \odot h_{t-1} + (1-z_t)\odot g_t,
\end{align}
where $\sigma$ is the sigmoid activation function, $W_z, U_z$ and $b_z$ are parameters, and $\odot$ denotes element-wise production. By subtracting $h_{t-1}$ from this update equation, we have
\begin{equation}
    \Delta h = h_t - h_{t-1} = (1-z_t)\odot (g_t-h_{t-1}).
\end{equation}
This naturally leads to the following ODE,
\begin{equation}
    \frac{\mathrm{d} h(t)}{\mathrm{d} t} = (1-z(t))\odot (g(t)-h(t)).
    \label{eq:dhdt}
\end{equation}

Different from ODE, DDE requires a continuous history function rather than a single point, to serve as the initial state. A common practice is to set the history function as a time-constant one and approximate it with a multi-layer perception according to the input data,
\begin{equation}
    \phi(t) = \text{constant} = \text{MLP}(x), \ \ \ t \leq 0
    \label{phi}
\end{equation}
where $x$ is the input data.
\subsubsection{Incorporating Spatio-temporal Delayed Correlations into NDDE}
We extend differential equations to the spatial-temporal modeling of the traffic domain. To incorporate spatial-temporal correlations, we utilize graph neural networks to extract spatial features and view them as update vectors, that is,
\begin{equation}
    g_i(t) = c \sum_{j\in \mathcal{N}(i)} \alpha_{ij}  f(h_j(t-\tau_{ij})),
    \label{eq:tau}
\end{equation}
where $g_i(t)$ is the update vector of node $i$ at time $t$, $\alpha_{ij}$ and $\tau_{ij}$ are the edge weight and delay value between node $v_i$ and $v_j$ respectively, $\mathcal{N}(i)$ denotes the neighbors of node $v_i$, $f$ is a linear transformation, and $c$ is a constant to control the ratio of spatial information. 
In this formulation, we extend DDE to accommodate multi-variable conditions, to choose the specific history hidden states for node update. Specifically, to update the representation of node $v_i$ at time $t$, we incorporate information from its neighbor $v_j$ at time $t - \tau_{ij}$, taking into account the delayed information propagation with a time delay of $\tau_{ij}$. The graph convolution operation is implemented with DGL package \cite{wang2019deep} in this work, whose complexity is proportional to the number of edges. 

\subsection{Traffic-Graph Time-Delay Estimator}
As shown in Fig. (1), there exist propagation delays in real-world traffic conditions, which are in contrast to the general GNN propagation process. For instance, when a traffic incident transpires in a particular area, it may necessitate several minutes to impact traffic conditions in adjacent regions. To gain more accurate depictions of the time delay, we design two delay estimators to capture the propagation delay between connected nodes. 

The direct approach estimates delays among neighboring nodes by maximizing the cross-correlation (MCC) \cite{azaria1984time} as a pre-processing step. Specifically, given two time series, $x_i$ and $x_j$, where we assume that $x_j$ is influenced by $x_i$, we initially smooth them through interpolation, resulting in $\Tilde{x}_i$ and $\Tilde{x}_j$. Subsequently, we determine the delay, denoted as $\tau_{ij}$, by identifying the peak of their cross-correlation after shifting, as expressed by the below equation, 
\begin{equation}
\tau_{ij} = \arg \max_k \text{corr}(\Tilde{x}_i^{\rightarrow{k}}, \Tilde{x}_j),
\label{mcc}
\end{equation}
where $\Tilde{x}_i^{\rightarrow{k}}$ denotes performing a $k$-step shift to $\Tilde{x}_i$, and $corr$ is the Pearson correlation function in this paper. We estimate all the delay values in advance through pre-processing based on historical data.


The second approach involves modeling time delay as a learnable pattern. The time delay implicitly reflects external factors associated with the traffic network, including road length, road capacity, and more. Furthermore, the delay itself exhibits inherent variability, for example, longer delays often occur during morning and evening rush hours. In this approach, we assign two learnable delay parameters to each edge: one for peak hours and another for non-peak hours.

Please note that the delay value $\tau$ serves as an indicator to select a historical state in the equation (\ref{eq:tau}). Generally, $\tau$ is considered non-learnable in this context because the model cannot compute the gradient of $\tau$, which is theoretically $\frac{\mathrm{d} h(t-\tau)}{\mathrm{d} \tau}$. However, thanks to the continuous modeling approach, we can indeed obtain this gradient. As demonstrated in equation (\ref{eq:dhdt}), the derivative of $h$ with respect to $t$ is well-defined. Consequently, we can incorporate the gradient of $\tau$ in the neural network by explicitly defining the backward computation of $h$ with respect to $\tau$.

\subsection{State Evolution Controller}
One key challenge with DDE is that, once the network parameters are fixed, the dynamic evolution becomes entirely self-contained and does not integrate incoming inputs, leading to the loss of valuable information. To address this issue, we introduce a control signal inspired by Neural CDE, offering a solution to this problem.

Following Neural CDE, we generate a continuous representation from the raw inputs through the natural cubic spline method, which ensures a minimum of two continuous differentiable properties,
\begin{align}
    X(t) = \Phi\left(\{x^0, t_0\}, \{x^1, t_1\}, \cdots, \{x^n, t_n\}\right),
\end{align}
where $X(t)$ is a continuous representation, $\Phi$ denotes the natural cubic spline function, and $x^i$ denotes the input at time $t_i$.
Thus we have 
\begin{equation}
    \frac{\mathrm{d} h_i(t)}{\mathrm{d} t} = (1-z_i(t))\odot (g_i(t)-h_i(t)) \Tilde{f}\left(\frac{\mathrm{d} X_i(t)}{\mathrm{d}t}\right),
\end{equation}
where $\Tilde{f}$ is a transformation function to match dimensions, and $X_i$ is the continuous representation of node $v_i$. The derivative of $X$, denoted as $\frac{\mathrm{d} X_i(t)}{\mathrm{d}t}$, signifies the trend or fluctuation in traffic flow, constantly influencing the direction of dynamic evolution.

We formulate the complete update process of hidden states as follows,
\begin{align}
    g_i(t) &= c\sum_{j\in \mathcal{N}(i)} \alpha_{ij}  f\left (h_j(t-\tau_{ij})\right ), \\ \nonumber
    z_i(t) &= \sigma (W_z h_i(t) + U_z g_i(t) + b_z), \\ \nonumber
    \frac{\mathrm{d} h_i(t)}{\mathrm{d} t} &= (1-z_i(t))\odot (g_i(t)-h_i(t)) \Tilde{f}\left(\frac{\mathrm{d} X_i(t)}{\mathrm{d}t}\right),  \\ \nonumber
    h_i(t) &=\left\{ \begin{aligned}
        & \phi_i(t), & t \leq 0 \\
        & h_i(0) + \int_0^t \frac{\mathrm{d} h_i(t)}{\mathrm{d} t} \mathrm{d} t, & t > 0.
    \end{aligned} 
    \right. 
    \label{stdde}
\end{align}

\subsection{Continuous Output Module}
We employ another STDDE to generate the final outputs. In this approach, we consider the last stage of the hidden flow in the input process as the history function for the output process. This strategy offers two key advantages: Firstly, the hidden states remain continuous within the manifold space, ensuring unity between the input and output processes. Secondly, unlike traditional output layers that provide predictions with a fixed horizon, we can accurately predict traffic flow at various frequencies. It provides more flexibility and adaptability to
different scenarios.
\begin{align}
    \frac{\mathrm{d} h(t)}{\mathrm{d} t} &= (1-z(t))\odot (g(t)-h(t))  \\ \nonumber
    h_i(t') &= h_i(T) + \int_T^{t'}\frac{\mathrm{d} h_i(t)}{\mathrm{d} t} \mathrm{d}t, \\ \nonumber
    y_i(t') &= f\left (h_i(t')\right ), \\ \nonumber
    Y_i &= \left[y_i(t_{T+1}), y_i(t_{T+2}), \cdots, y_i(t_{T+T'})\right]
    \label{eq:output}
\end{align}
where $f$ is a mapping function to get final outputs from hidden states, and $t'$ is any target output point. This output approach better highlights the continuity of STDDE and fully capitalizes on its capabilities. With this model, we have the flexibility to generate predictions at any time, rather than being limited to a specific point. The continuous output module is well-suited for scenarios involving sparse traffic sensor data, especially when a higher level of precision is required during inference than in the training phase.


Finally, as our objective function in the context of traffic flow forecasting, we employ the widely-used Huber loss, which is known for its robustness in handling outliers compared to the squared error loss.
\begin{equation}\label{eq:huber}
  \mathcal{L}(Y, \hat{Y})=\left\{
  \begin{aligned}
  &\frac{1}{2}(Y-\hat{Y})^2 &,\ & |Y-\hat{Y}|\leq \delta \\
  &\delta |Y-\hat{Y}| - \frac{1}{2}\delta^2&,\ & \text{otherwise}\\
  \end{aligned}
  \right.
\end{equation}
where $\hat{Y}$ is the ground truth, and $\delta$ is a hyperparameter which controls the sensitivity to outliers.

The time complexity analysis of STDDE is presented in the Appendix.

\subsection{Why It Works?}
\subsubsection{Connection to Existing Works} 
STDDE includes both delay effects and continuity into a unified delay differential equation framework, which \textbf{explicitly models the time delay in spatial information propagation}. A prior related study on delay-aware traffic forecasting is PDFormer \cite{jiang2023pdformer}. While PDFormer mentions the concept of time delay, its core mechanism involves utilizing attention with the historical time series, rather than explicitly utilizing delay in the propagation process. It implies that it still cannot capture the delay information in both spatial and temporal views.

Then we analyze the generalizability of STDDE from two perspectives:
1) In the temporal dimension, where GRU and its variants \cite{fu2016using} can be considered as special cases of STDDE when integration $\int \frac{dh_{i}(t)}{dt}$ is discrete. 2) In the spatial dimension, when all the time-delays $\tau_{ij}$ are set to zero, STDDE will degenerate to general GNNs \cite{ju2023comprehensive,song2020deep}.

\subsubsection{Stability}
Stability is a critical property for a DDE. Here we provide a theoretical analysis of our proposed DDE.
\begin{defn}
A delay differential equation is linked to a characteristic equation. If the real parts of all characteristic roots of the associated equation are negative, the delay differential equation is considered asymptotically stable.
\end{defn}
 
\begin{theorem}
\emph{
The proposed DDE is asymptotically stable when the balance constant $c\leq 1/K$.}
\end{theorem}
\begin{proof}\renewcommand{\qedsymbol}{}
Please see the Appendix for more details about the definition and the proof. 
\end{proof}

\section{Experiments}
\subsection{Datasets}
We verify the performance of STDDE on six real-world traffic datasets, namely PeMSD7 (M), PeMSD7 (L), PeMS03, PeMS04, PeMS07, and PeMS08, which are collected by Caltrans Performance Measurement System in real-time every 30 seconds \cite{chen2001freeway} and aggregated into 5-min intervals, which means there are 288 time-steps for one day. More details of the datasets are listed in Table \ref{tab:dataset}. We standardize the input by removing the mean and scaling to unit variance.
\begin{table}[htbp]
  \centering
    \begin{tabular}{cccc}
    \hline
    Datasets & \#Sensors & \#Edges & Time Steps  \\ \hline
    PeMSD7 (M) & 228  &1132  & 12672 \\
    PeMSD7 (L) & 1026 &10150 & 12672 \\
    PeMS03    & 358  &547   & 26208 \\
    PeMS04    & 307  &340   & 16992 \\
    PeMS07    & 883  &866   & 28224 \\
    PeMS08    & 170  &295   & 17856 \\ \hline
    \end{tabular}%
  \caption{The summary of datasets used in our paper.}
  \label{tab:dataset}%
  \vspace{-0.5cm}
\end{table}%

\subsection{Baselines}
We select the following representative baselines as our competitors, and more details can be found in the Appendix:
\begin{itemize}
    \item \textbf{Spatio-Temporal Graph Convolution Models} including {STGCN} \cite{yu2018spatio}, STSGCN \cite{song2020spatial}, DCRNN \cite{li2018diffusion}, AGCRN \cite{bai2020adaptive}, ASTGCN \cite{guo2019attention}, FOGS \cite{rao2022fogs}. STGCN utilizes graph convolution and 1D convolution to capture spatial-temporal correlations. STSGCN utilizes multiple localized spatial-temporal subgraph modules to capture the spatial-temporal correlations directly. DCRNN integrates graph convolution into an encoder-decoder gated recurrent unit. AGCRN captures node-specific spatial and temporal correlations automatically without a pre-defined graph. ASTGCN utilizes spatial and temporal attention mechanisms to model their dynamics. DSTAGN constructs a dynamic graph instead of relying on a pre-defined static one. FOGS utilizes first-order gradients rather than specific flows, which effectively circumvent issues associated with fitting irregularly-shaped distributions.
    \item \textbf{Spatial-Temporal Graph Ordinary Differential Equation Models}, including STG-ODE \cite{fang2021spatial} and STG-NCDE \cite{choi2021graph}. STGODE proposes an ordinary differential equation-based continuous GNN, to capture long-range spatial-temporal dependencies. STG-NCDE designs two NCDEs to capture temporal and spatial properties respectively.
     \item \textbf{Delay-aware Traffic Models} only include one related work, which is PDFormer \cite{jiang2023pdformer}. Its transformer-based mechanism involves utilizing attention to the historical time series.
\end{itemize}

\subsection{Experimental Settings}
We split all datasets with a ratio of 6: 2: 2 into training sets, validation sets, and test sets. One hour of historical data is used to predict traffic conditions in the next 60 minutes, i.e. $T=12$ and $T'=12$. All experiments are conducted on the same Linux server and  GPU. The dimension of hidden states is set to 64. We train our model using the Adam optimizer with a learning rate of 0.001. The batch size is 32 and the training epoch is 200. Mean absolute error (MAE), mean absolute percentage error (MAPE), and root mean squared error (RMSE) are used to measure the performance. For baselines, we use their officially reported results if accessible. If not, we run their codes based on their recommendation configurations.

\begin{table*}[htbp]
  \centering
  \resizebox{\linewidth}{!}{
    \begin{tabular}{ccccccccccccc}
    \toprule
    Dataset & Metric  & STGCN & DCRNN & ASTGCN(r) & STSGCN & STGODE & AGCRN & STG-NCDE & DSTAGNN & FOGS & PDFormer & \textbf{STDDE} \\
    \midrule
          & RMSE  & 7.55  & 7.18  & 6.87   & 5.93  & 5.66 & 5.54 & 5.39 & 5.54 & 5.54 & 5.60 & \textbf{5.19}\\ 
    PeMSD7(M) & MAE  & 4.01  & 3.83  & 3.61  & 3.01  & 2.97 & 2.79 & 2.68 & 2.78 & 2.76 & 2.81 & \textbf{2.56}\\ 
          & MAPE  & 9.67  & 9.81  & 8.84   & 7.55  & 7.36 & 7.02 & 6.76 & 6.93 & 6.83 & 7.06 & \textbf{6.61}\\ 
    \midrule
          & RMSE  & 8.28  & 8.33  & 7.64   & 6.88  & 5.98 & 5.92 & 5.76 & 5.98 & 6.04 & 5.90 & \textbf{5.63}\\ 
    PeMSD7(L) & MAE   & 4.84  & 4.33  & 4.09   & 3.61  & 3.22 & 2.99 & 2.87 & 2.98 & 2.96 & 2.92 & \textbf{2.77}\\ 
          & MAPE  & 11.76 & 11.41 & 10.25  & 9.13  & 7.94 & 7.59 & 7.31 & 7.50 & 7.48 & 7.54 & \textbf{7.26}\\ 
    \midrule
          & RMSE  & 30.42 & 30.31 & 29.56 & 29.21 & 27.84 & 28.25 & 27.09 & 27.39 & 24.85 & 25.96 & \textbf{24.52}\\ 
    PeMS03 & MAE  & 17.55 & 17.99 & 17.34 & 17.48 & 16.50 & 15.98 & 15.57 & 15.62 & 15.06 & \textbf{14.95}  & 15.03\\ 
          & MAPE  & 17.43 & 18.34 & 17.21 & 16.78 & 16.69 & 15.23 & 15.06 & 14.74 & 15.03 & 15.58  & \textbf{14.69}\\ 
    \midrule
          & RMSE  & 36.01 & 37.65 & 35.22 & 33.65 & 32.82 & 32.26 & 31.09 & 31.71 & 31.29 & 29.96 & \textbf{29.86}\\ 
    PeMS04 & MAE   & 22.66 & 24.63 & 22.94 & 21.19 & 20.84 & 19.83 & 19.21 & 19.38 & 19.44 & 18.31 & \textbf{18.11}\\ 
          & MAPE  & 14.34 & 17.01 & 16.43 & 13.90 & 13.77 & 12.97 & 12.76 & 12.77 & 12.81 & 12.07  & \textbf{12.07}\\ 
    \midrule
          & RMSE  & 39.34 & 38.61 & 37.87 & 39.03 & 37.54 & 36.55 & 33.84 & 34.88 & 34.09 & 32.80 & \textbf{32.59}\\ 
    PeMS07 & MAE   & 25.33 & 25.22 & 24.01 & 24.26 & 22.99 & 22.37 & 20.53 & 21.62 & 20.79 & 19.78 & \textbf{19.47}\\ 
          & MAPE  & 11.21 & 11.82 & 10.73 & 10.21 & 10.14 & 9.12 & 8.80 & 9.24 & 8.75 & 8.54 & \textbf{8.49}\\ 
    \midrule
          & RMSE  & 27.88 & 27.83 & 26.22 & 26.80 & 25.97 & 25.22 & 24.81 & 25.08 & 25.36 & 24.61 & \textbf{24.31}\\ 
    PeMS08 & MAE   & 18.11 & 17.46 & 16.64 & 17.13 & 16.81 & 15.95 & 15.45 & 15.85 & 16.10 & 15.66 & \textbf{15.12}\\ 
          & MAPE   & 11.34 & 11.39 & 10.6 & 10.96 &  10.62 & 10.09 & 9.92 & 9.93 & 9.85 & \textbf{9.61} & 
 9.74\\ 
    \bottomrule
    \end{tabular}%
    } 
  \caption{Performance comparison of baselines and proposed STDDE on six popular used real-world traffic datasets.}
\label{tab:result}
\vspace{-0.5cm}
\end{table*}

\subsection{Conceptual Experiments}
We first provide conceptual experiments to evaluate the necessity of motivation and the effectiveness of the proposed delay-aware differential equation and continuous output module.
\subsubsection{\textbf{Quantitative and Qualitative Validation of Time-delay }}
For quantitative validation, we design an invariant of our model, STDDE-no-delay, which sets all the delays as zero for comparison. We compute the average delay of each node in PEMS04 and choose the first $15\%$ and the last $15\%$ nodes after sorting the average delay value in ascending order, to compare the performances between STDDE and STDDE-no-delay. Table \ref{tab:delay} shows the result. We find that the results of the nodes with larger average delay are much worse than that of nodes with small delays (MAPE is not a stable metric because it is susceptible to the small values), which indicates the difficulty of dealing with long-range correlations. And STDDE achieves a larger improvement for the long-delay nodes due to its ability to model delay effects.

For qualitative validation, we provide a case study to evaluate the effectiveness of the proposed STDDE in capturing time delay in traffic flow forecasting, we carry out a case study in the real-world dataset. 
We select two connected neighbor nodes 196 and 198 from PeMS04 dataset to visualize the STDDE's perception with time delay in traffic flow forecasting. Results are shown in Fig \ref{fig:case}, the prediction results of STDDE are remarkably closer to the ground truth than STDDE-no-delay. In addition, there is a huge rise in node 196's traffic waveform in Fig \ref{fig:case} (a), and the result in (b) shows that STDDE-no-delay performs inaccurate feedback while STDDE does not. It further shows STDDE is able to capture and utilize the time delay information in traffic flow forecasting.

\begin{table}[h]
  \centering
    \begin{tabular}{ccccccccccccccc}
    \toprule
    Data & Metric & STDDE-no-delay & STDDE & Gain \\
    \midrule
          & RMSE & 16.97 & 16.86 & 0.65\%  \\
    First $15\%$ & MAE & 11.54 & 11.47 & 0.61\%  \\
          & MAPE  & 19.71 & 18.37 & 6.80\% \\
    \midrule
          & RMSE  & 37.72 & 34.59  & 8.30\% \\
    Last $15\%$ & MAE  & 25.06 & 23.24  & 7.26\% \\
          & MAPE  & 14.63 & 13.96 & 4.65\% \\
    \bottomrule
    \end{tabular}%
  \caption{Performances facing delays of different extent.}
\label{tab:delay}
\vspace{-0.5cm}
\end{table}

\begin{figure}[h]
  \centering
  \subfigure[]{
  \includegraphics[width=0.49\linewidth]{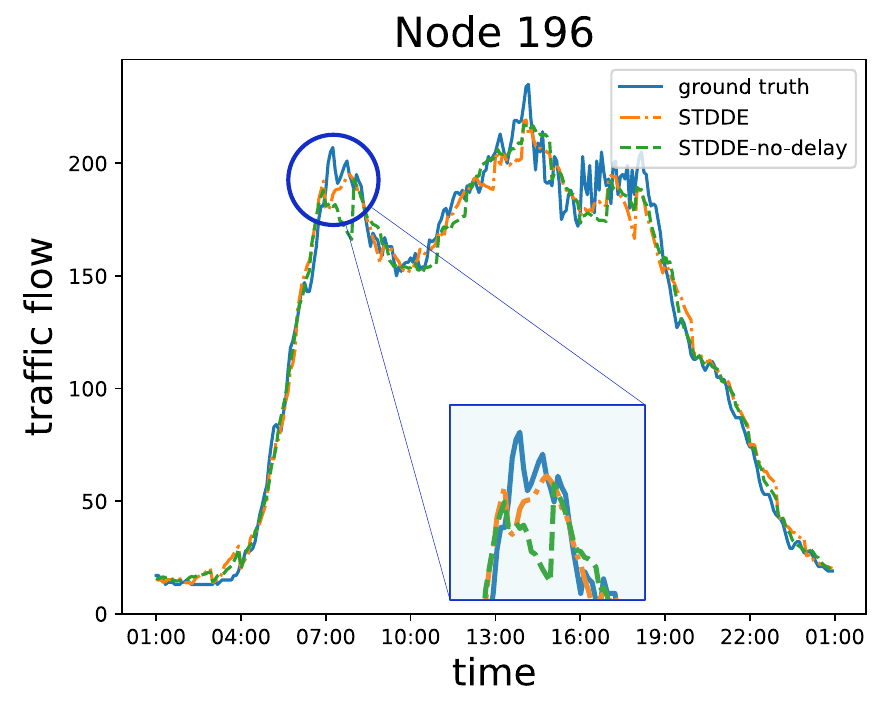}}
  \hspace{-2mm}
  \subfigure[]{
  \includegraphics[width=0.49\linewidth]{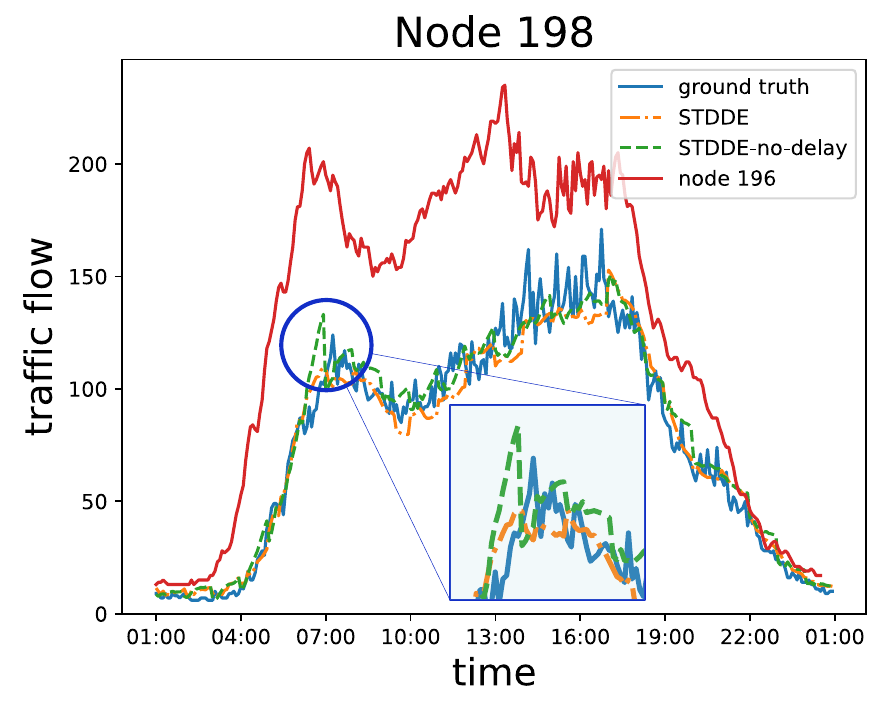}}
  \vspace{-0.25cm}
  \caption{Comparison of prediction results between our model and STDDE-no-delay.}
  \label{fig:case}
  \vspace{-0.5cm}
\end{figure}

\subsubsection{\textbf{Flexibility Validation of the Continuous-output Module}}
To test the flexibility of our model in real-world scenarios, we introduced a more challenging setting. We still have historical 60-min data to predict the traffic flow in the next 60 minutes. However, during the training process, we set the time interval as 10/15/20 minutes, which means the input steps are 6/4/3, and during the inference process, we change the time interval to 5 minutes. This configuration rigorously assesses the model's adaptability. For STDDE, owing to its continuity, we only need to increase the number of chosen states in the output module, from 6/4/3 to 12. For baseline models, we first acquire their prediction and then adopt the linear interpolation method to acquire a more fine-grained output.
The results are presented in Figure \ref{fig:interval}. In summary, the performance will degrade with the increase of the input interval. The performance of STDDE is significantly better than that of the baseline models. Compared to the linear interpolation method, the STDDE output module can model the inherent continuity and generate more accurate predictions.


\begin{figure*}[h]
  \centering
  \subfigure[RMSE]{
  \includegraphics[width=0.3\linewidth]{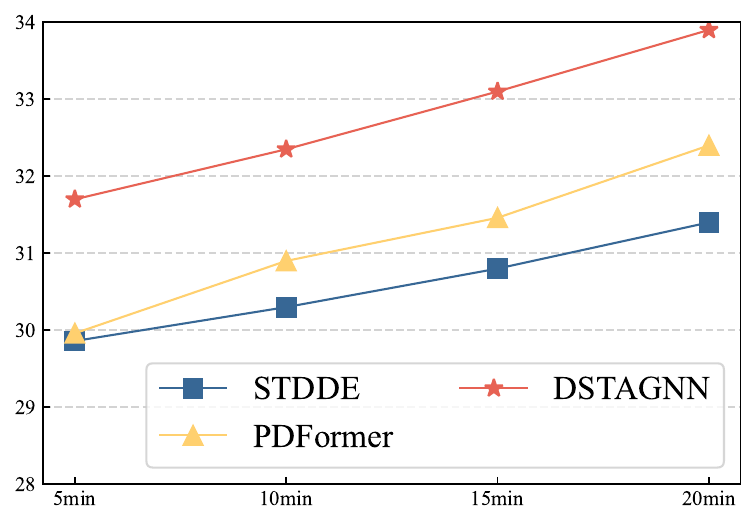}}
  \subfigure[MAE]{
  \includegraphics[width=0.3\linewidth]{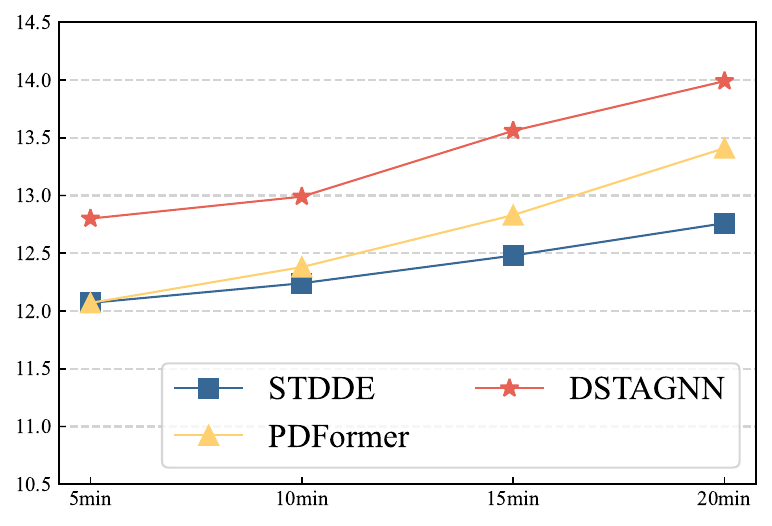}}
  \subfigure[MAPE]{
  \includegraphics[width=0.3\linewidth]{traffic_fig5_rmse.pdf}}
   \vspace{-0.3cm}
  \caption{Performance comparison with input time intervals greater than inference intervals.}
  \label{fig:interval}
\end{figure*}

\subsection{Overall Performances and Analysis}
Table \ref{tab:result} shows the results of the proposed STDDE model and competitive baselines on traffic flow forecasting tasks in six popular used real-world datasets. We conclude with the following findings:
\begin{itemize}
    \item Our model yields the best performance regarding all the metrics for most datasets, which suggests the effectiveness of our spatial-temporal delay traffic flow forecasting. 
    \item Continuous spatial-temporal neural networks, i.e. STGODE, STG-NCDE, and STDDE, perform better than traditional GNN-based ones, such as popularly used AGCRN, STGCN, DCRNN, and DSTAGNN. It shows the direction of continuous modeling in spatial-temporal traffic flow forecasting is effective and worth gaining more attention.
    \item The proposed STDDE and PDFormer generally perform better than other continuous spatial-temporal methods, i.e. STGODE and STG-NCDE, which indicates that capturing and utilizing historical delay-related 
    information is necessary and of great significance. 
    \item STDDE gains better performance than PDFormer, which shows the effectiveness of explicit spatial-temporal delay-aware differential equations and continuous modeling.
\end{itemize}

\begin{table}[htbp]
  \centering
    \begin{tabular}{ccccccccccccccccccc}
    \toprule
    \multirow{2}{*}{Model} & \multirow{2}{*}{\# Parameters} & \multicolumn{2}{c}{PeMSD7 (M)}  & \multicolumn{2}{c}{PeMSD7 (L)} \\
   \cmidrule(lr){3-4}   \cmidrule(lr){5-6} 
  &  & Train & Infer & Train & Infer \\
    \midrule
    STGODE & 328,646 & 131 & 13 & 1107 & 146 \\
    FOGS & 1,674,188 & 50 & 3 & 531 & 42 \\
    DSTAGNN & 2,784,988 & 168 & 43 & 1222 & 209 \\
    PDFormer & 531,165 & 120 & 11 & 1292 & 138 \\
    \midrule
    \textbf{STDDE}  & \textbf{175,830} & \textbf{82} & \textbf{9} & \textbf{734} & \textbf{84} \\    
    \bottomrule
    \end{tabular}%
  \caption{Comparison of \# parameters and running time in one epoch. (Unit: seconds)}
\label{tab:efficiency}
\vspace{-0.5cm}
\end{table}

\subsection{Model Analysis}
\subsubsection{Ablation Studies}
To verify the effectiveness of different modules of STDDE, we conduct the following ablation experiments on PeMS04 dataset and compare results with its corresponding variants.
\begin{itemize}
  \item v1 (STDDE-no-delay): We ignore the delay effect, and thus the model degenerates to an ODE model, to verify whether capturing the time delay signal is contributing.
  \item v2 (STDDE-zero-history): We set the history function as zero to verify the necessity of learnable history states.
  \item v3 (STDDE-fixed-delay): We use the pre-processed delay values as the inputs of STDDE.
\end{itemize}

\begin{figure}[hbpt]
  \centering
  \includegraphics[width=0.75\linewidth]{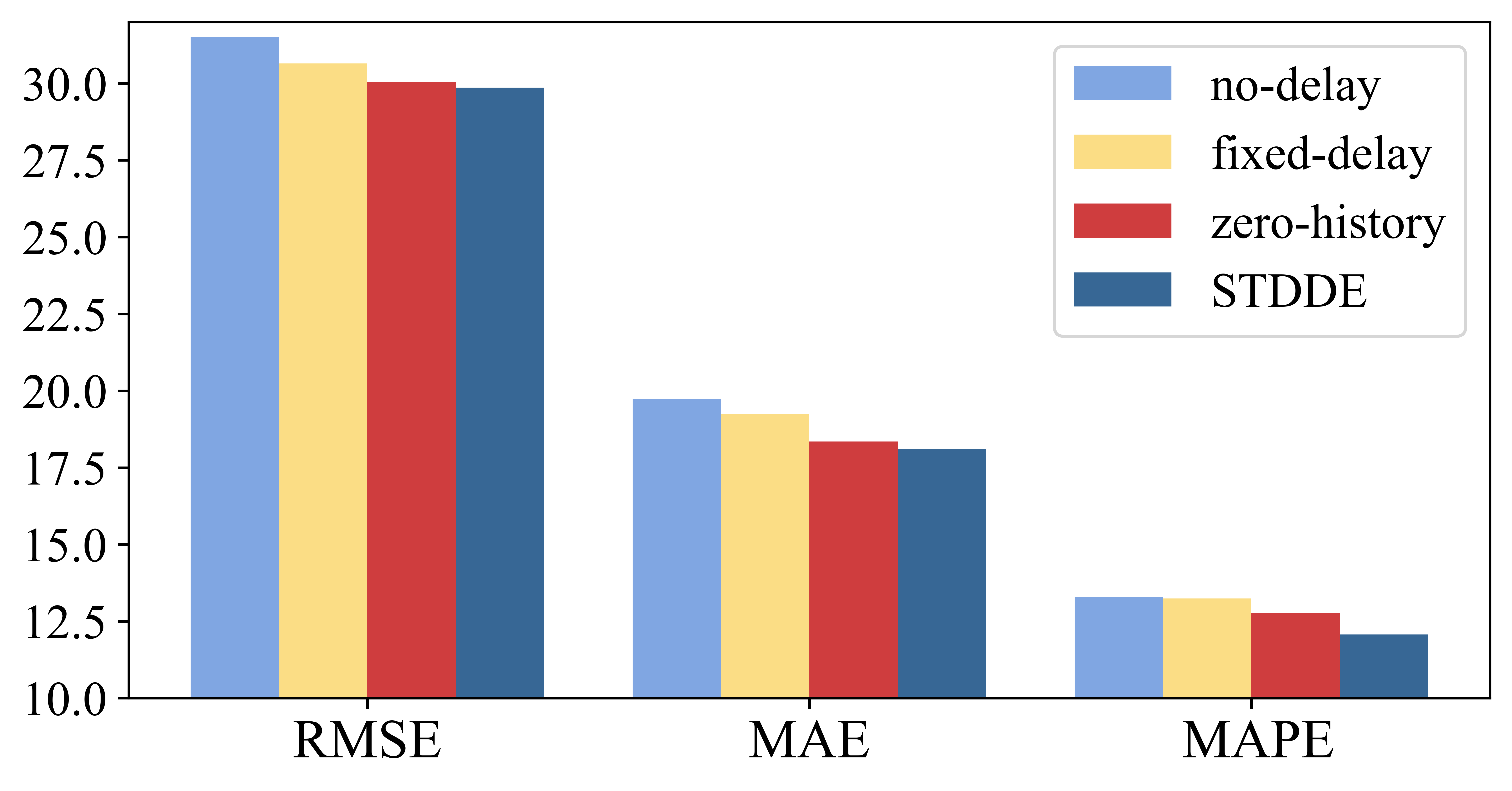}
  \caption{Ablation experiments of STDDE.}\label{fig:ablation}
  \vspace{-0.3cm}
\end{figure}

The result is presented in Fig \ref{fig:ablation}. 
The result shows that STDDE-no-delay has a significant performance gap with STDDE, which shows the necessity of utilizing time delay. Also, STDDE-fixed-delay performs worse than STDDE, which clearly shows the superiority of learnable delay values. 
In addition, STDDE performs better than STDDE-zero-history, because the historical states of DDE is critical to the update of a period of future states.


\subsubsection{Model Efficiency Analysis}
We conduct model efficiency analysis on STDDE and several representative baselines, i.e. STGODE, DSTAGNN, FOGS and PDFormer, in PeMSD7 (M) and PeMSD7 (L) datasets. 
Tab. \ref{tab:efficiency} reports the number of parameters, average training, and inference time per epoch. We find that STDDE achieves competitive computational efficiency in both the training and inference phases. In the largest dataset PeMSD7 (L), compared with the best-performing PDFormer, STDDE reduces the training and inference time by over 40\% and 20\%, respectively.

\subsubsection{Parameter Analysis}
We analyze the dimension of hidden states in STDDE, which influences the complexity of the state space. Fig \ref{fig:dimension} shows the result on dataset PEMS04. The performance rises with the increase of hidden dimension and achieves the best when the dimension is 128. Considering the balance of effectiveness and efficiency, we set the dimension as 64 in our model.

\begin{figure}[hbpt]
  \centering
  \includegraphics[width=0.75\linewidth]{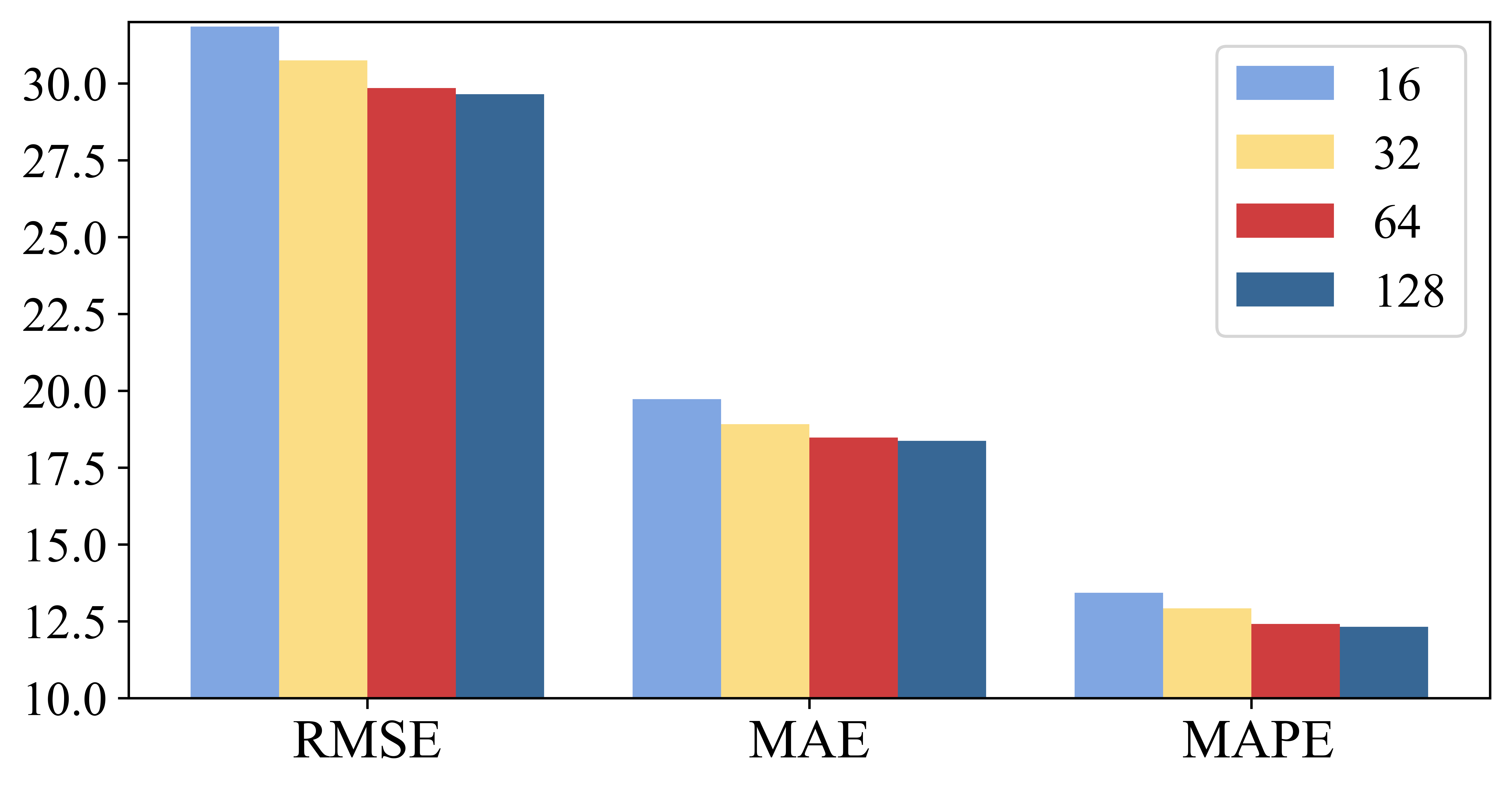}
  \caption{STDDE results with the change of hidden size.}\label{fig:dimension}
\end{figure}
\vspace{-3mm}

\section{Conclusion}
In this paper, we propose STDDE which includes both delay effects and continuity into a unified delay differential equation framework. It explicitly models the time delay in spatial information propagation. To gain more accurate depictions of the time delay, we design a traffic-graph time-delay estimator. In addition, we propose a continuous output module, allowing us to accurately predict traffic flow at various frequencies, which provides more flexibility and adaptability to different scenarios. Experimental results show the effectiveness and efficiency of STDDE.

\section*{Acknowledgments}
This research was supported by the Natural Science Foundation of China under Grant No. 61836013 and grants from the Strategic Priority Research Program of the Chinese Academy of Sciences XDB38030300.



\clearpage 
\appendix

\section{Appendix}
\subsection{Time Complexity Analysis}
The time complexity of STDDE is mainly associated with the ODE module, and the complexity of ODE depends on the solving algorithm. In this paper, we employ the Euler solver, so the complexity of aggregation is inversely proportional to the step length $\eta$. As we have $T$ time slots, the aggregation needs to be calculated $\frac{T}{\eta}$ times. As the complexity of neighbor aggregation is $O(E)$, the algorithm's complexity finally becomes $O(\frac{TE}{\eta})$.

\subsection{Theoretical Analysis}
Here we discuss the stability of the proposed DDE. We begin with introducing the basic definitions and lemmas.
\begin{defn}
The logarithmic norm $\mu$ of a square matrix $A$ is defined as 
\begin{equation}
    \mu_p(A)=\lim_{\delta \longrightarrow 0^+}\frac{\vert\vert I+\delta A\vert \vert_p - 1}{\delta},
\end{equation}
where $I$ is the identity matrix of the same dimension as $A$, and $\vert \vert * \vert \vert_p$ denotes an induced matrix norm. 
\end{defn}
The logarithmic norm is widely used in differential equation analysis \cite{strom1975logarithmic, soderlind2006logarithmic}, and plays an important role in our analysis.

\begin{lem}\cite{strom1975logarithmic}
\emph{Let $A = (a_{ij}) \in \mathbb{R}^{n\times n}$, then} 
\begin{align}
    \mu_1(A)&=\max_j\left[a_{jj}+\sum_{i,i\neq j}\vert a_{ij}\vert \right], \\
    \mu_2(A)&=\frac{1}{2}\max_{i}\left[\lambda_i(A+A^T)\right], \\
    \mu_{\infty}(A)&=\max_i\left[a_{ii}+\sum_{j,j\neq i}\vert a_{ij}\vert \right].
\end{align}
\end{lem}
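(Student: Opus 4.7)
The plan is to evaluate the defining limit
$\mu_p(A)=\lim_{\delta\to 0^+}(\|I+\delta A\|_p-1)/\delta$
separately for $p\in\{1,2,\infty\}$, in each case plugging in the standard closed form for the induced $p$-norm of $I+\delta A$ and extracting the leading-order term in $\delta$. Since $\|I\|_p=1$ for all three norms, the quantity inside the limit is well defined, and the task reduces to showing $\|I+\delta A\|_p = 1+\delta\,L_p(A)+o(\delta)$ with $L_p(A)$ equal to the three right-hand sides.

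For $p=1$ I would use the maximum-column-sum formula $\|B\|_1=\max_j\sum_i|b_{ij}|$. Applied to $B=I+\delta A$ this gives
\[
\|I+\delta A\|_1 = \max_j\Bigl(|1+\delta a_{jj}| + \delta\sum_{i\neq j}|a_{ij}|\Bigr).
\]
For $\delta>0$ small enough, $1+\delta a_{jj}>0$ uniformly in $j$ (since there are only finitely many indices), so $|1+\delta a_{jj}|=1+\delta a_{jj}$. Factoring out, the expression equals $1+\delta\max_j(a_{jj}+\sum_{i\neq j}|a_{ij}|)$, and dividing by $\delta$ and taking the limit gives the claimed formula. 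The case $p=\infty$ is identical up to swapping the roles of rows and columns, via $\|B\|_\infty=\max_i\sum_j|b_{ij}|$.

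The $p=2$ case is the delicate one. Here $\|B\|_2 = \sqrt{\lambda_{\max}(B^\top B)}$, so I would compute
\[
(I+\delta A)^\top(I+\delta A) = I+\delta(A+A^\top)+\delta^2 A^\top A.
\]
Since $A+A^\top$ is symmetric, it has real eigenvalues, and by a standard first-order perturbation argument (e.g.\ Weyl's inequality or Rayleigh quotients) the largest eigenvalue of $I+\delta(A+A^\top)+O(\delta^2)$ equals $1+\delta\,\lambda_{\max}(A+A^\top)+O(\delta^2)$. Taking square roots, $\|I+\delta A\|_2 = 1+\tfrac{\delta}{2}\lambda_{\max}(A+A^\top)+O(\delta^2)$, and the limit yields $\mu_2(A)=\tfrac12\max_i\lambda_i(A+A^\top)$.

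The only real obstacle is making the perturbation step for $p=2$ rigorous: one needs that the largest eigenvalue of a symmetric matrix varies differentiably (or at least with the right one-sided expansion) under a symmetric perturbation. I would handle this cleanly via the Rayleigh quotient characterization, writing $\lambda_{\max}(M(\delta))=\max_{\|v\|=1}v^\top M(\delta)v$, noting that for each unit $v$ the inner maximand is exactly $1+\delta\,v^\top(A+A^\top)v+\delta^2\|Av\|^2$, then interchanging $\max_v$ with the $\delta\to 0^+$ limit using a standard uniform-continuity argument on the unit sphere. The $p=1$ and $p=\infty$ cases are essentially bookkeeping once the sign of $1+\delta a_{ii}$ is controlled, which requires only $\delta < 1/\max_i|a_{ii}|$.
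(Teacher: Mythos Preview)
Your argument is correct and is the standard derivation of these classical formulas; however, the paper does not actually prove this lemma. It is stated with a citation to Str\"om (1975) and used as a known result, so there is no ``paper's own proof'' to compare against. Your sketch would constitute a valid self-contained proof where the paper offers none: the $p=1$ and $p=\infty$ cases are exactly the usual column/row-sum computations (with the observation that $1+\delta a_{jj}>0$ for small $\delta$), and for $p=2$ your Rayleigh-quotient route is a clean way to make the first-order eigenvalue expansion rigorous without invoking heavier perturbation theory.
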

From theorem1, we see that $\mu_p(A)$ is easy to calculate for $p=1, \infty$, or estimate for $p=2$, which brings convenience.

\begin{defn}[]
A linear multi-delay differential equation is defined as
\begin{equation}
    \frac{\mathrm{d}h(t)}{\mathrm{d} t} = A_0 h(t) + \sum_{k=1}^{K} A_k h(t_{\tau_k}),
    \label{multi}
\end{equation}
where $h(t)=(h_0(t), \cdots, h_N(t)$ is a vector, $A_0, A_k \in \mathbb{R}^{n\times n}$ are constant matrix, and $h(t_{\tau_k})=(h_1(t-\tau_{k1}), \cdots, h_N(t-\tau_{kN})$.
\end{defn}
Taking the simplest two-variable delay differential equations as an example, 
\begin{equation}
    \left\{
    \begin{aligned}
        \frac{\mathrm{d}u(t)}{\mathrm{d} t} &=a_1 u(t) + b_1 v(t-\tau_2), \\
        \frac{\mathrm{d}v(t)}{\mathrm{d} t} &=a_2 u(t) + b_2 v(t-\tau_1),
    \end{aligned}
    \right.
\end{equation}

by letting 
\begin{equation}
    h(t) = (u(t), v(t))^T, h(t_\tau) = (u(t-\tau_1), v(t-\tau_2))^T,
\end{equation}
\begin{equation}
    A=\begin{pmatrix} a_1 & 0 \\ 0 & a_2 \end{pmatrix},
    B = \begin{pmatrix} 0 & b_1 \\ b_2 & 0 \end{pmatrix},
\end{equation}
we have 
\begin{equation}
    \frac{\mathrm{d}h(t)}{\mathrm{d} t} = A y(t) + B y(t_\tau).
\end{equation}

\begin{theorem}
\emph{The proposed DDE is bounded between two linear multi-delay systems.}
\end{theorem}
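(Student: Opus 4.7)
My approach is to exploit the observation that all nonlinearity and time-variation on the right-hand side of the STDDE comes from two sources: the gating term $(1-z_i(t))$, which is componentwise bounded in $(0,1)$ because $z_i=\sigma(\cdot)$, and the control signal $\tilde{f}(dX_i/dt)$, which is bounded whenever the natural cubic spline derivative is bounded (always true for traffic observations on a finite horizon). Writing $M_i(t) := (1-z_i(t))\odot \tilde{f}(dX_i(t)/dt)$ and substituting the linear $f = W(\cdot)$ into equation (\ref{eq:tau}), the STDDE reads
\[
\frac{dh_i(t)}{dt} \;=\; -\,M_i(t)\odot h_i(t) \;+\; c\,M_i(t)\odot \!\!\sum_{j\in\mathcal{N}(i)}\!\!\alpha_{ij}\,W\,h_j(t-\tau_{ij}),
\]
so the self-coefficient of $h_i(t)$ and every delay-coefficient of $h_j(t-\tau_{ij})$ is a product of the bounded factor $M_i(t)$ with a constant.

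First I would define the componentwise envelopes $M_i^{+} := \sup_t M_i(t)$ and $M_i^{-} := \inf_t M_i(t)$, and replace $M_i(t)$ by each of these constants in turn. This yields two constant-coefficient linear multi-delay systems matching the form of Definition 3,
\[
\frac{dh^{\pm}(t)}{dt} \;=\; A_0^{\pm}\,h^{\pm}(t) + \sum_{k=1}^{K} A_k^{\pm}\,h^{\pm}(t_{\tau_k}),
\]
with $A_0^{\pm}$ assembled from $-M_i^{\pm}$ along the block diagonal and each $A_k^{\pm}$ collecting the constant delay-coefficients $c\,\alpha_{ij}\,M_i^{\pm}\odot W$ for those edges $(i,j)$ whose delay $\tau_{ij}$ is indexed by $k$. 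These two systems serve as the upper and lower envelopes required by the statement.

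Second I would invoke a componentwise differential-inequality argument. Since $M_i(t)\in[M_i^{-},M_i^{+}]$ pointwise, under the common history function $\phi(t)$ from equation (\ref{phi}) the true trajectory $h(t)$ satisfies
\[
A_0^{-} h(t) + \sum_{k=1}^{K} A_k^{-}\,h(t_{\tau_k}) \;\le\; \frac{dh(t)}{dt} \;\le\; A_0^{+} h(t) + \sum_{k=1}^{K} A_k^{+}\,h(t_{\tau_k}),
\]
and the standard Gronwall-type comparison lemma for delay systems (the same toolkit already invoked through Sun and Yorke for Theorem 1) then sandwiches $h(t)$ between the solutions of the two constant-coefficient systems above, yielding the theorem.

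The main obstacle I expect is making the comparison step fully rigorous. Multi-delay comparison principles require monotonicity of the right-hand side in the off-diagonal entries, and the Hadamard product by the vector $M_i(t)$ mixes sign patterns channel-wise, so a global monotone sandwich can fail. The cleanest route around this is to decouple across hidden channels: within each scalar channel the coefficients collapse to scalars, the multi-delay inequalities reduce to the classical scalar form for which comparison is well known, and reassembling the channels reconstructs the two bounding linear multi-delay systems whose existence Theorem 2 asserts.
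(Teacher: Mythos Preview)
Your route is considerably more elaborate than the paper's and aims at a stronger conclusion. The paper's proof explicitly discards the control signal, analyzing only $dh_i/dt=(1-z_i)\odot(g_i-h_i)$, and then uses nothing beyond the crude envelope $(1-z_i)\in(0,1)$ to write $0\le dh_i/dt\le g_i(t)-h_i(t)$. Consequently the lower bounding system is the trivial one ($A_0=0$, all $A_k=0$), and the upper one has $A_0=-I$ with the $A_k$ assembled from the weights $c\alpha_{ij}$ alone; the linear map $f$ is silently absorbed. Crucially, the paper only bounds the \emph{derivative} by the right-hand sides of two linear multi-delay systems evaluated at the current state; it never invokes a comparison or Gronwall principle to sandwich the \emph{solution} itself. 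So all the machinery you bring in---node-specific envelopes $M_i^\pm$, retention of $\tilde f(dX_i/dt)$ and $W$, delay-system comparison lemmas, channelwise decoupling---goes well beyond what Theorem~2 in the paper actually establishes. What you gain is fidelity to the full model and, if it worked, a genuine trajectory sandwich; what the paper gains is a two-line argument that feeds directly into the logarithmic-norm estimate of Theorem~3.

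There is also a real gap in your construction that your last paragraph does not close. Because $M_i(t)$ multiplies both the decaying term $-h_i$ and the incoming term $+c\,\alpha_{ij}Wh_j$, substituting the \emph{same} envelope $M_i^{+}$ in both does not produce a one-sided bound: for $h_i>0$ one has $-M_i^{+}h_i\le -M_i(t)h_i$, which is the wrong direction for the diagonal of an upper system. Channelwise decoupling does not cure this sign mixing; at minimum you would need to pair $-M_i^{-}$ on the diagonal with $+M_i^{+}$ off-diagonal (and conversely for the lower system), and even then the solution-level comparison requires a quasimonotonicity hypothesis on the off-diagonal coupling that you have not verified. The paper sidesteps all of this by not attempting a solution comparison at all.
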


\begin{proof}
Without considering the control signal in Eqn. (\ref{eq:dhdt}), we have
\begin{equation}
    0 \leq \frac{\mathrm{d} h_i(t)}{\mathrm{d} t} \leq g_i(t) - h_i(t).
\end{equation}
The lower bound is easy, and we reformulate the upper bound as
\begin{equation}
    g_i(t) - h_i(t) = -h_i(t) + c\sum_{j\in \mathcal{N}_i}\alpha_{ij} h_j(t-\tau_{ij})
\end{equation}
Further, let $H(t)=(h_1(t), h_2(t), \cdots, h_n(t))^T$, we have
\begin{align}
    \frac{\mathrm{d} H(t)}{\mathrm{d} t} \leq -I H(t) + \sum_{k=1}^K A_k \cdot H(t_{\tau_k}),
\end{align}
where $A_{k}$ is a matrix in which the $i$-th row contains at most one non-zero element of $c\alpha_{ij}$, and $H(t_{\tau_k}) = (h_1(t_{\tau_{k1}}),  \cdots, h_n(t_{\tau_{kn}}))$ is the re-group of delayed signals. $K$ denotes the maximum of node degrees, as each edge appears only once in the matrices. 
\end{proof}

\begin{defn}
The characteristic equation associated with the differential equation (\ref{multi}) is defined as 
\begin{equation}
    \det \left[zI-A_0-\sum_{k=1}^N A_K\exp(-zT_k) \right] = 0,
    \label{character}
\end{equation}
where $\exp(-zT_k)=\text{diag}(e^{-z\tau_{k1}}, e^{-z\tau_{k2}}, \cdots, e^{-z\tau_{kN}})$
\end{defn}

\begin{lem}\cite{sun2006stability}
\emph{If the real parts of all characteristic roots of equation (\ref{character}) are less
than zero, then the system is asymptotically stable.}
\end{lem}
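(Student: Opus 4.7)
The plan is to chain together the three pieces already laid out in the appendix: Theorem~2 reduces the nonlinear DDE to a bounded linear multi-delay system, Lemma~2 (Sun 2006) reduces asymptotic stability to a spectral statement about the characteristic equation, and the logarithmic-norm machinery from Lemma~1 gives an easily-checkable sufficient condition for that spectral statement. The whole argument then boils down to computing two numbers: $\mu_{\infty}(A_0)$ and $\sum_{k=1}^{K}\|A_k\|_{\infty}$.

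First I would invoke Theorem~2 to replace the study of the nonlinear DDE by the study of the linear majorant
\begin{equation}
\frac{\mathrm{d}H(t)}{\mathrm{d}t} = -I\,H(t) + \sum_{k=1}^{K} A_k\,H(t_{\tau_k}),
\end{equation}
with $A_0 = -I$. Since each row of $A_k$ carries at most one entry, equal to $c\alpha_{ij}$ for some neighbor $j$, and since $\alpha_{ij}\in[0,1]$ under the standard normalization of graph weights, I get $\|A_k\|_{\infty}\le c$, and summing over the $K$ delay classes gives $\sum_{k=1}^{K}\|A_k\|_{\infty}\le Kc$. Meanwhile Lemma~1 yields $\mu_{\infty}(A_0)=\mu_{\infty}(-I)=-1$.

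Next I would write down the characteristic equation from Definition~4,
\begin{equation}
\det\!\Bigl[zI - A_0 - \sum_{k=1}^{K} A_k \exp(-zT_k)\Bigr]=0,
\end{equation}
and argue by contradiction: suppose $z$ is a root with $\mathrm{Re}(z)\ge 0$. Then $|e^{-z\tau_{kj}}|\le 1$ for every delay entry, so for any induced matrix norm the submultiplicative/triangle inequalities give
\begin{equation}
|z-\lambda|\;\le\;\sum_{k=1}^{K}\|A_k\|\quad\text{for some eigen-direction},
\end{equation}
which, combined with $\mu_{\infty}(A_0)=-1$ (controlling the real part of the spectrum of $A_0$), forces $\mathrm{Re}(z)\le \mu_{\infty}(A_0)+\sum_{k}\|A_k\|_{\infty}\le -1+Kc$. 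Under the hypothesis $c\le 1/K$ this gives $\mathrm{Re}(z)\le 0$, and the strict inequality required for asymptotic stability is obtained either by noting that equality forces all $\alpha_{ij}$ to saturate simultaneously (a measure-zero case one can rule out) or by treating $c<1/K$ as the operative condition and the boundary case via a perturbation argument. Finally, applying Lemma~2 to the linear majorant and pulling back through Theorem~2's sandwich concludes asymptotic stability of the original DDE.

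The main obstacle I foresee is the boundary case $c=1/K$: the logarithmic-norm bound gives only $\mathrm{Re}(z)\le 0$, which is marginal, not strict, stability. Handling it cleanly will require either sharpening the spectral inequality using the fact that the $A_k$ have at most one nonzero per row (so they are far from achieving the worst-case norm simultaneously on the same eigenvector), or accepting a strict inequality $c<1/K$ and arguing the equality case by continuity. A secondary technical point is justifying the comparison step in Theorem~2 rigorously: the inequality $0\le h'(t)\le g(t)-h(t)$ only controls the derivative componentwise, so I would need a standard differential-inequality / comparison-principle argument to conclude that asymptotic decay of the linear majorant forces asymptotic decay of the nonlinear $H(t)$ itself.
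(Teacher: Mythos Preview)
You have misidentified the target. The statement you were asked to prove is Lemma~2, the cited result from Sun (2006): \emph{if every root of the characteristic equation has negative real part, then the linear multi-delay system is asymptotically stable}. The paper does not prove this at all; it simply imports it as an external fact and then uses it (together with Lemma~3, also imported from Sun) as a tool inside the proof of Theorem~3. Your proposal never addresses the content of Lemma~2 itself. A proof of Lemma~2 would have to start from the assumption ``all characteristic roots lie in the open left half-plane'' and derive decay of solutions of the linear system~(\ref{multi}); it would involve Laplace-transform or exponential-type estimates for delay equations, not logarithmic norms or the constant~$c$.

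What you have written is instead a sketch of the paper's proof of Theorem~3 (the $c\le 1/K$ stability claim), in which Lemma~2 appears only as an ingredient. Even read that way, your middle paragraph is redundant: the paper does not re-derive the spectral inequality by contradiction but simply quotes Lemma~3 (also from Sun), which packages the implication $\mu(A_0)+\sum_k\|A_k\|\le 0 \Rightarrow$ all roots in the left half-plane $\Rightarrow$ asymptotic stability. So your contradiction argument duplicates a lemma that is already available, and the boundary-case discussion you worry about is left unaddressed in the paper just as it is in your sketch.
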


\begin{lem}\cite{sun2006stability}
\emph{
If the condition
\begin{equation}
    \mu(A_0) + \sum_{k=1}^K \vert\vert A_k\vert\vert \leq 0
\end{equation}
holds, then Lemma 2 holds, and the system is asymptotically stable.}

\end{lem}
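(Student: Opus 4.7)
The plan is to apply the sufficient condition from Lemma 3 to the linear upper-bound system produced by Theorem 2, and then transfer the resulting asymptotic stability back to the original nonlinear DDE via the comparison already established.

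First, I would invoke Theorem 2 to replace the proposed DDE with its linear majorant $\tfrac{dH(t)}{dt}\le -I\,H(t)+\sum_{k=1}^{K}A_{k}\,H(t_{\tau_{k}})$. Here $A_{0}=-I$ and each $A_{k}$ is the matrix that, by construction, groups the contribution of one neighbor per node, so every row of $A_{k}$ contains \emph{at most one} nonzero entry of magnitude $c\alpha_{ij}$. The comparison part of Theorem 2 ensures that if the linear majorant is asymptotically stable then so is the original system, because the (nonnegative) gating and control factors dropped in the upper bound can only dampen the trajectory further.

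Second, I would compute the quantities that appear in Lemma 3 using Lemma 1. For $A_{0}=-I$ the $\infty$-norm logarithmic norm is $\mu_{\infty}(-I)=-1$, since the diagonal entry is $-1$ and there are no off-diagonal terms. For each $A_{k}$, applying $\|\cdot\|_{\infty}$ (row-sum norm) together with the at-most-one-nonzero-per-row structure gives $\|A_{k}\|_{\infty}\le c\,\max_{ij}\alpha_{ij}\le c$ under the standard normalization $\alpha_{ij}\in[0,1]$ used by the graph-convolution weights. Summing over $k=1,\dots,K$ yields
\begin{equation}
\mu(A_{0})+\sum_{k=1}^{K}\|A_{k}\|\;\le\;-1+Kc,
\end{equation}
which is nonpositive precisely when $c\le 1/K$.

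Third, I would chain the two lemmas: the displayed inequality satisfies the hypothesis of Lemma 3, which certifies that all roots of the characteristic equation of the upper-bound system lie in the open left half-plane; Lemma 2 then delivers asymptotic stability of that linear multi-delay system, and the comparison from Step 1 transports this conclusion to the proposed DDE. The main obstacle I anticipate is making the comparison step watertight: one has to argue that a componentwise differential inequality (rather than an equality) genuinely transfers asymptotic stability, and one must verify that the bound $\|A_{k}\|_{\infty}\le c$ is uniform in $k$ regardless of how the neighbors of each node are distributed across the $K$ partition matrices. Both reduce to the choice of row-sum norm and to the normalization of $\alpha_{ij}$, so I would make these conventions explicit at the start of the proof.
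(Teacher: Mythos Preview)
You have proved the wrong statement. The item you were given is Lemma~3, a general sufficient condition for asymptotic stability of the linear multi-delay system~(\ref{multi}), quoted verbatim from \cite{sun2006stability}. The paper does not prove this lemma at all; it merely cites it as an external tool. A proof of Lemma~3 would have to start from the hypothesis $\mu(A_0)+\sum_k\|A_k\|\le 0$ and show directly that every root of the characteristic equation~(\ref{character}) satisfies $\operatorname{Re} z<0$ (for instance by assuming $\operatorname{Re} z\ge 0$ and deriving a contradiction via norm estimates on $zI-A_0-\sum_k A_k\exp(-zT_k)$). Your write-up does none of this.

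What you have actually sketched is the proof of Theorem~3 (the paper's Theorem~1 in the main text), which \emph{applies} Lemma~3 to the specific upper-bound system coming from Theorem~2. On that score your argument is essentially identical to the paper's: compute $\mu_\infty(-I)=-1$, bound each $\|A_k\|_\infty\le c$ from the one-nonzero-per-row structure and the normalization of $\alpha_{ij}$, sum to $-1+Kc\le 0$, and invoke Lemmas~2--3. The only substantive difference is that you flag the comparison step (transferring stability from the linear majorant back to the nonlinear DDE) as a potential gap, whereas the paper simply asserts it with a reference to \cite{yorke1970asymptotic}; your caution there is well placed, but it belongs to Theorem~3, not to the lemma you were asked to address.
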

\begin{theorem}
\emph{
The proposed DDE is asymptotically stable when the balance constant $c\leq 1/K$.}
\end{theorem}

\begin{proof}
We prove the result starting from its upper bound and lower bound.
The lower bound is clear to satisfy the condition in Lemma 3. We focus on the upper bound. We prove the result for norm $p=\infty$, and the result can be generalized to other norms easily.
According to the theorem 1, we have
\begin{align}
    \mu_{\infty}(A_0) &= \mu_{\infty}(-I) = -1, \\
    \vert\vert A_k\vert\vert_{\infty} &= \max_i \sum_{j=1}^{n}|A_{k_{ij}}| = \max_i c\alpha_{ij(k)}.
\end{align}
The second equation holds because there is at most one non-zero element in each row of $A_k$, and $c\alpha_{ij(k)}$ denotes the element in row $i$. As $\alpha_{ij}$ is the normalized weight for graph convolution, we have 
\begin{equation}
    \vert\vert A_k\vert\vert_{\infty} = \max_i c\alpha_{ij(k)} \leq c,
\end{equation}
and when $c <= 1/K$, 
\begin{equation}
    \mu(A_0) + \sum_{k=1}^K \vert\vert A_k\vert\vert \leq -1 + K \cdot c \leq 0,
\end{equation}
the condition (33) holds. 
Thus the lower and upper bound of DDE are both asymptotically stable \cite{yorke1970asymptotic}.
\end{proof}

\end{document}